\providecommand{\keywords}[1]{\textbf{\textit{Index terms---}} #1}
\newcommand{\onemax}{\text{\sc OneMax}\xspace} 
\newcommand{\leadingones}{\text{\sc LeadingOnes}\xspace} 
\newcommand{\linear}{\text{\sc Linear}\xspace}
\newcommand{\bvleadingones}{\text{\sc BVLeadingOnes}\xspace} 
\newcommand{\umda}{\text{\sc UMDA}\xspace} 
\newcommand{\eda}{\text{\sc EDA}\xspace} 
\newcommand{\edas}{\text{\sc EDAs}\xspace} 
\newcommand{\ea}{\text{\sc EA}\xspace} 
\newcommand{\eas}{\text{\sc EAs}\xspace} 
\newcommand{\cga}{\text{\sc cGA}\xspace} 
\theoremstyle{definition}
\newtheorem{theorem}{Theorem}
\newtheorem{lemma}{Lemma}
\setlist[itemize]{leftmargin=*}
\setlist[enumerate]{leftmargin=*}
\DeclareMathOperator{\bernoulli}{Ber} 
\DeclareMathOperator{\Var}{Var} 
\newcommand{\prob}[1]{\Pr\left(#1\right)}
\newcommand{\expect}[1]{\mathbb{E}\left[#1\right]}
\newcommand{\var}[1]{\Var\left[#1\right]}
\title{Improved Runtime Bounds for the Univariate 
Marginal Distribution Algorithm via 
Anti-Concentration\footnote{An extended abstract of this report 
appeared in the proceedings of the 2017 Genetic and 
Evolutionary Computation Conference (GECCO 2017).}}
\author{Per Kristian Lehre \& Phan Trung Hai Nguyen\\
School of Computer Science\\
University of Birmingham\\
Birmingham B15 2TT, UK\\
\texttt{\{p.k.lehre,p.nguyen\}@cs.bham.ac.uk}
}
\date{February 02, 2018}
\begin{document}
\maketitle

\begin{abstract}
Unlike traditional evolutionary algorithms 
which produce offspring via genetic operators,
Estimation of Distribution Algorithms (\eda{}s) sample solutions from
probabilistic models which are learned from selected individuals. 
It is hoped that \eda{}s may improve optimisation performance on
epistatic fitness landscapes by learning variable interactions.
However, hardly any rigorous results are available to support claims
about the performance of EDAs, even for fitness functions without
epistasis.  The expected runtime of the Univariate Marginal Distribution
Algorithm (UMDA) on \onemax was recently shown to be in
$\mathcal{O}\left(n\lambda\log \lambda\right)$
\cite{bib:Dang2015a}. Later, Krejca and Witt \cite{bib:Krejca}
proved the lower bound
$\Omega\left(\lambda\sqrt{n}+n\log n\right)$ via an involved drift
analysis . 

We prove a $O\left(n\lambda\right)$ bound, given some restrictions on
the population size. This implies the tight bound $\Theta\left(n\log
n\right)$ when $\lambda=O\left(\log n\right)$, matching the runtime of
classical EAs.  Our analysis uses the level-based theorem and
anti-concentration properties of the Poisson-binomial distribution. We
expect that these generic methods will facilitate further analysis of
EDAs.
\end{abstract}

\keywords{Runtime Analysis, Level-based Analysis, 
Estimation of Distribution Algorithms}

\section{Introduction}
Estimation of Distribution Algorithms are a class 
of randomised search heuristics with many practical applications
\cite{bib:Hauschild}. Unlike traditional \eas which look for optimal
solutions by explicitly building and maintaining a population of
promising individuals, \edas rely on a probabilistic model to
represent information gained from the optimisation process over
generations. There are many different variants of \edas have been 
developed over the last decades, and the fundamental differences 
between them are the ways the interactions of decision variables are captured 
as well as how the probabilistic model is updated over generations. The earliest 
\edas treated each variable independently, whereas later ones 
model variable dependencies \cite{bib:Shapiro2005}. 
Some examples of univariate
\edas are the compact genetic algorithm (\cga) and the Univariate 
Marginal Distribution Algorithm (\umda). Multi-variate \edas, such as the
Bayesian Optimisation Algorithms  which builds a 
Bayesian network with nodes and edges representing 
variables and conditional dependencies, 
attempt to learn relationships between the
decision variables   \cite{bib:Hauschild}. 
See \cite{bib:Hauschild} 
for other variants and more practical applications of \edas.

The compact genetic algorithm was the first 
univariate \eda whose runtime was 
analysed rigorously. Introduced in \cite{bib:Harik}, 
the algorithm samples two individuals in each generation and then 
evaluates them to  determine the winner which is used to update 
the probabilistic model. A quantity of $1/K$ is shifted towards 
the winning bit value for each position where the two 
individuals differ. 
The first rigorous runtime analysis of \cga was completed by Droste in
\cite{bib:Droste2006} where a  lower bound  $\Omega(K\sqrt{n})$ 
for any functions is provided using additive drift theory 
where $n$ being the problem size. The result is obtained 
by estimating an upper bound for an entity named 
\textit{surplus} which is believed to reduce the overall running 
time if a large value appears in every generation. 
In addition, he proved an upper bound $\mathcal{O}(nK)$ for any 
linear function where $K=n^{1+\epsilon}$ 
for any small constant $\varepsilon>0$. Later studies showed
that given a fitness function $f$, \cga have problems
optimising functions with many 
$f$-independent bit positions, such as  \leadingones 
\cite{bib:Friedrich2016}. This is because the 
marginal probabilities of those positions are very 
close to the borders 0 or 1, which makes it harder 
to change those bits. A variant of the cGA, the so-called
stable compact genetic algorithm (s\cga) was introduced 
where the marginal probability $p_t(i)$ of any 
$f$-independent position tends to concentrate 
around $1/2$ (i.e. stable). Given certain parameter settings,
s\cga is able to optimise  \leadingones within 
$\mathcal{O}(n\log n)$ generations with probability polynomially close to $1$. 

Similar to \cga, \umda is a powerful algorithm 
with a wide range of applications not only in 
computer science but also in other areas.
The most studied variant is often implemented with upper 
and lower borders for marginal probabilities to prevent decision 
variables from being fixed at values zero or one. 
The population in each generation is sampled from a joint 
distribution which is the product of marginal probabilities 
for all variables. The UMDA is related to the notion of linkage
equilibrium, which is a popular assumption in Population Genetics.
Hence, understanding of \umda can contribute to the understanding of
population dynamics in Population Genetics models.

Despite the fact that the \umda has been analysed over the past years,
the understanding of its runtime is still limited. The algorithm was
analysed in series of papers \cite{bib:Chen2007,bib:Chen2009b,
	bib:Chen2009a,bib:Chen2010} where time-complexities of the \umda 
on simple unimodal functions were derived. These result shows 
that \umda with margins often outperforms 
other variants of \umda without margins, 
especially on functions like \bvleadingones. Shapiro 
\cite{bib:Shapiro2005} investigated \umda with a 
different selection mechanism rather than truncation selection. In
particular, their variant of \umda samples individuals whose
fitnesses are no less than the mean fitness before using them to
update the probabilistic model. By representing \umda as 
a Markov chain, the paper shows that the population size 
has to be in the order of square-root of the problem size for 
\umda to be able to optimise \onemax. The first upper 
bound on the expected optimisation time of \umda on 
\onemax was not published until 2015 
\cite{bib:Dang2015a}. By working on another variant 
of \umda which employs truncation selection,
Dang and Lehre \cite{bib:Dang2015a} proved an upper 
bound $\mathcal{O}(n\lambda\log \lambda)$ for \umda on \onemax 
which requires a population size $\Omega(\log n)$. 
If $\lambda = \mathcal{O}(\log n)$, then 
the upper bound is $\mathcal{O}(n\log n\log \log n)$.
The result is obtained by applying a relatively new 
technique called level-based theorem \cite{DBLP:Corus2017}. 
Very recently, Krejca and Witt \cite{bib:Krejca} obtain a 
lower bound $\Omega(\mu\sqrt{n}+n\log n)$ of \umda on 
\onemax via an involved drift analysis where 
$\lambda = (1+\Theta(1))\mu$. 
As can be seen, the upper and lower bounds 
are still different by $\Theta(\log \log n)$, 
which raises the question of whether this gap could be 
closed and a better asymptotic runtime would then be obtained.

This paper derives the
upper bound $\mathcal{O}(n\lambda)$ for \umda on \onemax which holds for 
$\lambda=\Omega(\mu)$ and $c\log n\leq \mu =\mathcal{O}(\sqrt{n})$ , 
where $c$ is some positive 
constant.
If $\lambda=\mathcal{O}(\log n)$, we have a tight bound 
$\Theta(n\log n)$ which matches with the well-known 
expected runtime $\Theta(n\log n)$ of the (1+1) \ea on the class of linear functions.
The result is achieved with the application of an anti-concentration 
bound which might be of general interest. The new result 
improves the known upper bound $\mathcal{O}(n\lambda\log \lambda)$ 
of \umda on \onemax \cite{bib:Dang2015a} by removing the logarithmic
factor $\mathcal{O}(\log \lambda)$. 
This improvement is significant becauses it for the first
time it closes the gap mentioned above for a 
small range of population size. In addition, 
we also believe that the easy-to-use method employed to 
obtain the result can be used for other algorithms and fitness functions.

This paper is structured as follows. In Section $2$, we first 
present the \umda algorithm under investigation. 
This section also includes a pseudo-code of the \umda.
The level-based theorem which is central in the paper will be 
stated in Section $3$. In this section, a sharp bound on the sum of 
Bernoulli random variables is also described. Given all 
necessary tools, Section $4$ illustrates our proof idea 
in a visual way and suggests how it could be applied for other problems.
The main result for
\umda on \onemax is presented in Section $5$. Section $6$ presents 
a brief empirical analysis of \umda on \onemax to complement 
the theoretical findings in Section $5$. Finally, concluding 
remarks are given in Section $7$.

\textit{Independent work:} Witt \cite{bib:Witt2017} 
independently obtained the upper bounds 
$\mathcal{O}(\mu n)$ on the expected optimisation 
time of the \umda on \onemax for $\mu \geq c\log n$, where $c$ is a positive constant, and $\lambda=(1+\Theta(1))\mu$
using an involved drift analysis. While our result does not hold for
$\mu=\omega(\sqrt{n})$, our methods yield a significantly easier proof
which also holds when the parent population size $\mu$ is not
proportional to the offspring population size $\lambda$.

\section{UMDA}
The Univariate Marginal Distribution Algorithm (UMDA) proposed in
\cite{bib:Muhlenbein1996} is one of the simplest variants of 
Estimation of Distribution Algorithms. In each
generation, the algorithm builds a probabilistic model over the
search space based on information gained about the individuals in the
previous generation. To optimise a pseudo-Boolean fitness function
$f:\{0,1\}^n\rightarrow\mathbb{R}$, the UMDA builds a product
distribution represented by a vector
$p_t=\left(p_t(1), p_t(2),\ldots,p_t(n)\right)$ in every generation
$t\in\mathbb{N}$.  Each component $p_t(i)\in[0,1]$ 
for $i\in[n]$ and $t\in \mathbb{N}$ represents
the probability of sampling a 1-bit at the $i$-th position of the
offspring in generation $t+1$ where $[n]$ denotes the set 
$\{1,2,3,\ldots,n\}$. Therefore, 
each candidate solution $(x_1,\ldots,x_n)\in \{0,1\}^n$ is 
sampled with joint probability
\begin{displaymath}
\Pr\left(x_1,\ldots,x_n\right)=\prod_{i=1}^{n}p_t(i)^{x_i}\cdot(1-p_t(i))^{(1-x_i)}.
\end{displaymath}
We will use the standard
initialisation $p_0(i)\coloneqq 1/2$ for all $i\in [n]$.
Starting with the initial model $p_0$, the algorithm 
continuously, in every generation $t\in\mathbb{N}$, sample
$\lambda$ individuals $P_t(1),\ldots,P_t(\lambda)$ 
using the current model $p_t$. All
individuals in the current population are sorted according to their
fitnesses, and the top $\mu$ individuals are selected to compute the
next model $p_{t+1}$. Let $P_t(k,i)$ denote the value in the $i$-th bit 
position of the $k$-th individual in current population $P_t$. Then
each component of the next model is defined as
\begin{displaymath}
p_{t+1}(i)\coloneqq \frac{1}{\mu}\sum_{k=1}^{\mu}P_t(k,i)
\end{displaymath}
which can be interpreted as the frequency of 1-bit 
among the $\mu$ best individuals in position $i$.

The special case $p_{t+1}(i)\in \{0,1\}$ must be avoided because the bit in
position $i$ would remain fixed forever at either 0 or 1. 
This would result in parts of the search space becoming unreachable.
In order to prevent this
situation, the model components are often restricted to a closed
interval, i.e. $p_{t+1}(i)\in [m'/\mu,1-m'/\mu]$, 
where the parameter $m'<\mu$  controls the size 
of the margins. For completeness, the following 
pseudo-code describes the full algorithm 
(see Algorithm \ref{umda-algor}).

\begin{algorithm}
	\DontPrintSemicolon	
	\Begin{
		initialise $p_0(i)=1/2$ for all $i \in [n]$,\;
		\For{$t=0,1,2,\ldots$ {\bf until termination condition met}}{
			\For{$k=1,2,\ldots,\lambda$}{
				sample $P_t(k,i) \sim \bernoulli\left(p_t(i)\right)$ for all $i\in[n]$\;
			}
			sort $P_t$ in descending order according to fitness,\;
			\For{$i=1,2,\ldots,n$}{
				let $X_i\coloneqq\sum_{k=1}^{\mu}P_t(k,i),$\;
				\uIf{$X_i<m'$}{$p_{t+1}(i)=m'/\mu$,\;}
				\uElseIf{$X_i>\mu-m'$}{	$p_{t+1}(i)=1-m'/\mu$, \;}
				\Else{$p_{t+1}(i)=X_i/\mu$. \;}					
	}}}
	\caption{\umda \label{umda-algor}}
\end{algorithm}

\section{Methods}
\subsection{Level-Based Theorem}\label{level-based-theorem-section}
The level-based theorem is a general tool that provides 
upper bounds on the expected optimisation time of 
many population-based algorithms on a wide range of 
optimisation problems. For example, it has been 
successfully applied to investigate the runtime of 
the Genetic Algorithms with or without crossover 
on various problems like $\linear$ or $\leadingones$ 
\cite{DBLP:Corus2017}. Besides, the first upper 
bounds of \umda on \onemax and \leadingones have 
been obtained using this method \cite{bib:Dang2015a}.

The theorem assumes that the algorithm to be analysed can be described in the form of 
Algorithm \ref{abstract-algor}. Let $\mathcal{X}$ be a finite 
search space which is, for example, $\{0,1\}^n$ in the case of 
binary representation. The algorithm considers a population 
$P_t$ at generation $t\in \mathbb{N}$ of 
$\lambda$ individuals that is represented as a vector 
$(P_t(1),P_t(2),\ldots,P_t(\lambda))\in \mathcal{X}^\lambda$. 
The theorem is general because it does not assume
specific fitness functions, selection mechanisms, or
generic operators like mutation and crossover. Rather, the theorem
assumes that there exists, possibly implicitly, a mapping 
$\mathcal{D}$ from the set of populations $\mathcal{X}^\lambda$ to the space of 
probability distribution over the search space  $\mathcal{X}$. The mapping $\mathcal{D}$ depends only 
on the current population and is used to produce the individuals in
the next generation \cite{DBLP:Corus2017}. 

\begin{algorithm}
	\DontPrintSemicolon
	\KwData{Finite search space $\mathcal{X}$, 
		population size $\lambda\in \mathbb{N}$, a\\
		mapping $\mathcal{D}$ from $\mathcal{X}^\lambda$ 
		to probability distributions over $\mathcal{X}$, and an
		initial population $P_0\in\mathcal{X}^\lambda.$}
	\Begin{
		\For{$t=0,1,2,\ldots$ {\bf until termination condition met}}{
			\For{$i=1,2,3,\ldots,\lambda$}{
				sample $P_{t+1}(i)\sim \mathcal{D}(P_t)$
	}}}
	\caption{Population-based algorithm\label{abstract-algor}}
\end{algorithm}

Furthermore, the theorem assumes a partition
$A_1,\ldots,A_m$ of the search space $\mathcal{X}$
into $m$ subsets, which we call levels. We assume that the last level $A_m$ 
consists of all optimal solutions. Although there are many 
different ways to create the partition, it should be chosen 
using prior knowledge of the specific problem under investigation and
the behaviour of the algorithm. 
One class of such partition is the well-known canonical 
fitness-based partition where all solutions with the same $f$-value are 
gathered to form a level. Let 
\begin{math}
A_{\geq j} \coloneqq \cup_{i=j}^{m}A_i
\end{math}
be the set of all individuals belonging to level 
$A_j$ or higher. We denote 
\begin{math}
|P_t \cap A_j|\coloneqq |\{i\mid P_t(i)\in A_j\}|
\end{math}
to be the number of individuals of the population $P_t$ belonging 
to level $A_j$. Given these conventions, we can state the level-based theorem as follows.
\begin{theorem}[Theorem 1, \cite{DBLP:Corus2017}]
	\label{thm:levelbasedtheorem}
	Given a partition $\left(A_i\right)_{i \in [m]}$ of $\mathcal{X}$, define 
	\begin{math}
	T\coloneqq\min\{t\lambda \mid |P_t\cap A_m|>0\}
	\end{math}	
	to be the first time $t$ that at least one element
	of level $A_m$ appears in the current 
	population $P_t$. If there exist 
	$z_1,\ldots,z_{m-1}, \delta \in (0,1]$, and $\gamma_0\in (0,1)$ 
	such that for any population $P_t \in \mathcal{X}^\lambda$,
	\begin{itemize}
		\item (G1) for each level $j\in[m-1]$, if $|P_t\cap A_{\geq j}|\geq \gamma_0\lambda$ then 
		\begin{displaymath}
		\Pr_{y \sim \mathcal{D}(P_t)}\left(y \in A_{\geq j+1}\right) \geq z_j.
		\end{displaymath}
		\item (G2) for each level $j\in[m-2]$, and all 
		$\gamma \in (0,\gamma_0]$, if $|P_t\cap A_{\geq j}|\geq 	
		\gamma_0\lambda$ and $|P_t\cap A_{\geq j+1}|\geq \gamma\lambda$ then
		\begin{displaymath}
		\Pr_{y \sim \mathcal{D}(P_t)}\left(y \in A_{\geq j+1}\right) \geq \left(1+\delta\right)\gamma.
		\end{displaymath}
		\item (G3) and the population size $\lambda \in \mathbb{N}$ satisfies
		\begin{displaymath}
		\lambda \geq \left(\frac{4}{\gamma_0\delta^2}\right)\ln\left(\frac{128m}{z_*\delta^2}\right) 
		\end{displaymath}
		where $z_* \coloneqq \min_{j\in [m-1]}\{z_j\}$, then
		\begin{displaymath}
		\mathbb{E}\left[T\right] \leq \left(\frac{8}{\delta^2}\right)\sum_{j=1}^{m-															1}\left[\lambda\ln\left(\frac{6\delta\lambda}{4+z_j\delta\lambda}\right)+\frac{1}{z_j}\right].
		\end{displaymath}
	\end{itemize}
\end{theorem}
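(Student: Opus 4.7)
The plan is to track the progress of the algorithm through the levels by a stochastic process $Y_t \coloneqq \max\{j \in [m] \mid |P_t \cap A_{\geq j}| \geq \gamma_0 \lambda\}$ and bound the expected time until $Y_t = m$. I would decompose the total hitting time as a sum, over levels $j \in [m-1]$, of the expected number of generations spent climbing from $Y_t = j$ to $Y_t = j+1$, multiplying by $\lambda$ at the end to convert generations to function evaluations.

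For a fixed level $j$, the climb splits naturally into two sub-phases. In the \emph{seeding} phase, the population satisfies $|P_t \cap A_{\geq j}| \geq \gamma_0 \lambda$ but still contains no individual in $A_{\geq j+1}$; by (G1), each of the $\lambda$ offspring produced in a generation lies in $A_{\geq j+1}$ independently with probability at least $z_j$, so the expected number of generations until some offspring enters $A_{\geq j+1}$ is at most $1/(1-(1-z_j)^\lambda)$, contributing $O(1/z_j)$ to the evaluation count. In the \emph{amplification} phase, writing $\gamma \coloneqq |P_t \cap A_{\geq j+1}|/\lambda$, condition (G2) says the expected number of $A_{\geq j+1}$-offspring in the next generation is at least $(1+\delta)\gamma\lambda$. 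Because the $\lambda$ offspring are i.i.d.\ samples from $\mathcal{D}(P_t)$, a Chernoff bound guarantees that with probability $1-\exp(-\Omega(\gamma_0 \delta^2 \lambda))$ the next-generation fraction is at least $(1+\delta/2)\gamma$, and I would combine this with a logarithmic potential such as $\phi(\gamma)=\ln(\gamma_0/\gamma)$ that drops by a constant per step in expectation, so amplification completes in $O(\ln(\gamma_0 \lambda)/\delta^2)$ generations with high probability.

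The role of hypothesis (G3) is to make the Chernoff failure probabilities small enough that a union bound over all $m$ levels and all amplification steps succeeds: the $\ln(128 m / (z_* \delta^2))$ factor is tuned for this union bound and the $4/(\gamma_0 \delta^2)$ factor sets the Chernoff tail. I would also need a monotonicity-like argument to control rare downward fluctuations that momentarily drop $|P_t \cap A_{\geq j}|$ below $\gamma_0 \lambda$ after $Y_t$ first reached $j$; these would be handled by a restart-style argument that charges any such setback to a new waiting-time period and absorbs the cost into constants.

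The main obstacle, in my view, is not any single estimate but the careful bookkeeping that converts per-level probabilistic statements into the specific closed form $(8/\delta^2)\sum_{j}[\lambda\ln(6\delta\lambda/(4+z_j\delta\lambda))+1/z_j]$. That unusual logarithmic term is the smooth interpolation between the $O(\lambda \ln(\gamma_0 \lambda)/\delta^2)$ amplification cost, which dominates when $z_j\lambda$ is large, and the $O(1/z_j)$ seeding cost, which dominates when $z_j$ is very small; once the two phases are bounded separately the stated expression should fall out by balancing these regimes and simplifying. I would verify the constants by plugging the worst case $z_j = z_*$ into the union bound and checking that the choices in (G3) leave enough slack for the Chernoff concentration to hold throughout the entire optimisation.
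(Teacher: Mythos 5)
This theorem is imported from Corus et al.\ and the paper does not reprove it; it only records that the original proof ``applies drift theory to the distance measured by a level function,'' i.e.\ a single carefully designed potential combining the current level $Y_t$ with the fraction of the population already at the next level, analysed with an additive (or variable) drift theorem. Your proposal takes a genuinely different route: a per-level phase decomposition (seeding via (G1), amplification via (G2)) with Chernoff bounds, a union bound tuned by (G3), and a restart argument for setbacks. This is roughly the architecture of older fitness-level arguments for non-elitist populations, and it has the virtue of being conceptually transparent, but it is also precisely the approach the level-function/drift proof was designed to replace, because two of its steps do not close as sketched.

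First, the conversion from high-probability statements to the stated bound on $\mathbb{E}[T]$ is a real gap, not bookkeeping. A union bound over a fixed number of generations controls a failure probability, but $T$ is unbounded, and on the failure event the process may lose essentially all of its progress at every level simultaneously; ``charging the setback to a new waiting period and absorbing it into constants'' presupposes a bound on the expected recovery time, which is the quantity you are trying to establish. Second, your amplification phase for level $j+1$ silently assumes the invariant $|P_t\cap A_{\geq j}|\geq\gamma_0\lambda$ persists throughout, since (G2) requires both hypotheses; but sustaining level $j$ at occupancy $\gamma_0\lambda$ is itself only guaranteed with high probability via (G2) applied at level $j-1$, and so on recursively down to level $1$. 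This nested dependency across all levels at once is exactly what the global level function in the cited proof encodes in a single drift inequality, so that one application of a drift theorem yields the expectation bound directly, with no union bound over levels or time and no restart argument. Your identification of the two regimes behind the term $\lambda\ln\left(6\delta\lambda/(4+z_j\delta\lambda)\right)+1/z_j$ is a correct reading of where the costs come from, but to recover that expression with those constants you would in effect have to rebuild the level function, at which point the drift route is shorter.
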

Informally, the first condition (G1) requires that the 
probability to obtain an individual at level $A_{j+1}$ 
or higher is at least $z_j$ given that at least $\gamma_0\lambda$ 
individuals in the current population are in level $A_j$ 
or higher. Condition (G2) requires that given that $\gamma_0\lambda$ individuals 
of the current population belong to level $A_j$ or 
higher, and, moreover, $\gamma\lambda$ of them are lying at 
levels no lower than $A_{j+1}$, the probability of sampling a 
new offspring belonging to level $A_{j+1}$ or higher is no 
smaller than  $(1+\delta)\gamma$. The last condition (G3) 
sets a lower limit on the population size $\lambda$. 
As long as all three conditions are satisfied, 
an upper bound on the expected runtime of the population-based 
algorithm is guaranteed. 

Traditionally in Evolutionary Computation, we often 
define running time (or optimisation time) as the total 
number of fitness evaluations performed by the 
algorithm until an optimal solution has been found 
for the first time. However, 
the random variable $T\coloneqq\min\{t\lambda\mid|P_t\cap A_m|>0\}$ 
in Theorem~\ref{thm:levelbasedtheorem} 
is the total number of candidate 
solutions sampled by the algorithm until the first generation where an optimal 
solution is witnessed for the first time. In the 
context of \umda, these two entities are not always 
identical as $T$ is never smaller than the optimisation time.
Since the level-based theorem 
provides upper bounds on the optimisation time, this 
will not cause any problems.

The detailed proof of the level-based theorem can be seen in
\cite{DBLP:Corus2017} in which drift theory is applied to 
the distance measured by a level function. To apply 
the level-based theorem, it is recommended to 
follow a five-step procedure (see \cite{DBLP:Corus2017} 
for more details). It starts by identifying a proper partition of 
the search space, and then find specific parameter settings 
such that conditions (G1) and (G2) are met, followed by  
verifying that the population size that should be large 
enough, and, finally, an upper bound on the expected runtime is provided.

\subsection{A Uniform Bound on the Sum of Bernoulli Trials}
In order to show that conditions (G1) and (G2) in the 
level-based theorem are verified, we will use a
sharp upper bound on the probability 
$\Pr\left(Y=y\right)$ for any $y$, where $Y$ 
represents the level of a sampled offspring. 
Let $Y_i$ be a Bernoulli random variable with 
success probability $p_i$ that represents the 
bit value at position $i$ in a sampled 
offspring, and then $Y\coloneqq \sum_{i=1}^n Y_i$. 
The distribution of $Y$ is known as the 
Poisson-Binomial Distribution, and it has expectation
$\mathbb{E}\left[Y\right]=\sum_{i=1}^{n}p_i$ and 
variance $\sigma_n^2=\sum_{i=1}^{n}p_i\left(1-p_i\right)$. 
We will make use of a sharp upper bound on 
$\Pr\left(Y=y\right)$ from \cite{bib:Baillon}. 

\begin{theorem}[Theorem 2.1, \cite{bib:Baillon}]\label{thm:anticoncentration}
	Let $Y_1,Y_2,\ldots,Y_n$ be $n$ independent Bernoulli 
	random variables with success probability $p_i$. Let 
	$Y=\sum_{i=1}^{n}Y_i$ denote the sum of these random 
	variables and let $\sigma_n^2=\sum_{i=1}^{n}p_i(1-p_i)$ 
	be the variance of $S_n$. The following result holds 
	for all $n$, $y$ and $p_i$
	\begin{displaymath}
	\sigma_n\cdot\Pr\left(Y=y\right)\leq \eta	
	\end{displaymath}
	where $\eta$ is an absolute constant being
	\begin{displaymath}
	\eta =\max_{\lambda\geq 0}\sqrt{2\lambda} 
	e^{-2\lambda}\sum_{k=0}^{\infty}\left(\frac{\lambda^k}{k!}\right)^2 
	\sim 0.4688.
	\end{displaymath}
\end{theorem}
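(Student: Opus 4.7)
The plan is to prove the anti-concentration bound via Fourier inversion on $\mathbb{Z}$ combined with a convexity reduction. First I would represent the point mass of the lattice variable $Y$ as
\[
\Pr(Y=y) \;=\; \frac{1}{2\pi}\int_{-\pi}^{\pi} e^{-ity}\,\phi_Y(t)\,dt,
\qquad
\phi_Y(t) \;=\; \prod_{j=1}^n\bigl(1-p_j+p_j e^{it}\bigr),
\]
and then apply a trivial modulus bound together with the identity $|1-p+pe^{it}|^{2} = 1 - 2p(1-p)(1-\cos t)$ to obtain
\[
\Pr(Y=y) \;\leq\; \frac{1}{2\pi}\int_{-\pi}^{\pi}\prod_{j=1}^n \sqrt{1-2a_j(1-\cos t)}\,dt,
\]
where $a_j := p_j(1-p_j)\in[0,1/4]$ and $\sum_j a_j = \sigma_n^{2}$. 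The integrand now depends on the parameters only through the variance profile $(a_j)$.

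The second step is a Jensen reduction to the equal-probability case. For each fixed $t$ the map $a\mapsto \tfrac{1}{2}\log(1-2a(1-\cos t))$ is concave on $[0,1/4]$, so Jensen's inequality applied to $\sum_{j=1}^n \log\sqrt{1-2a_j(1-\cos t)}$ yields
\[
\prod_{j=1}^n \sqrt{1-2a_j(1-\cos t)} \;\leq\; \bigl(1-2\bar a(1-\cos t)\bigr)^{n/2},
\qquad \bar a := \sigma_n^{2}/n,
\]
and the elementary inequality $(1-x)^{n/2}\leq e^{-nx/2}$ then replaces the right-hand side by $\exp(-\sigma_n^{2}(1-\cos t))$, eliminating all dependence on $n$ and on the individual $p_j$'s.

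The third step is to evaluate the resulting integral in closed form. Using the modified Bessel identity $I_{0}(s)=\tfrac{1}{2\pi}\int_{-\pi}^{\pi} e^{s\cos t}\,dt$ with $s=\sigma_n^{2}$ gives
\[
\Pr(Y=y) \;\leq\; \frac{1}{2\pi}\int_{-\pi}^{\pi} e^{-s(1-\cos t)}\,dt \;=\; e^{-s}\,I_{0}(s),
\]
so that $\sigma_n\Pr(Y=y) \leq \sqrt{s}\,e^{-s}I_{0}(s) = \sqrt{2\lambda}\,e^{-2\lambda}\sum_{k\geq 0}(\lambda^{k}/k!)^{2}$ after the substitution $\lambda = s/2$ and the power-series expansion $I_{0}(2\lambda)=\sum_{k\geq 0}(\lambda^{k}/k!)^{2}$. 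The right-hand side no longer depends on $y$ or on any individual $p_j$, so taking the supremum over $\lambda\geq 0$ yields precisely $\eta$ by definition.

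The main obstacle is identifying and justifying the right convexity reduction in the second step. Without it one faces a multi-parameter optimisation over $(p_1,\ldots,p_n)$ and no obvious route to a single scalar constant; once the concavity of $a\mapsto\log(1-2a(1-\cos t))$ is spotted, Jensen performs the reduction in one stroke, but some care is needed near $t=\pm\pi$ where $a_j=1/4$ pushes the argument of the logarithm to the boundary of its domain. It is worth noting that a naive Gaussian surrogate for the final integral (replacing $1-\cos t$ by $t^{2}/2$ and extending to $\mathbb{R}$) would return only the strictly smaller constant $1/\sqrt{2\pi}\approx 0.3989$, which is in fact not a valid upper bound for the Poisson-binomial family; keeping the exact cosine integrand is essential, and the resulting Bessel-based scalar optimisation that produces $\eta$ is handled numerically because the critical equation mixes $I_{0}$ and $I_{1}$ and admits no closed form.
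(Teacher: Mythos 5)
The paper offers no proof of this statement: it is imported verbatim as Theorem~2.1 of the cited work of Baillon, Cominetti and Vaisman, so there is no internal argument to compare against. Judged on its own terms, your derivation is correct and complete, and it follows the same characteristic-function route as the original source: Fourier inversion on the integer lattice, the identity $|1-p+pe^{it}|^{2}=1-2p(1-p)(1-\cos t)$, elimination of the individual $p_j$ in favour of the total variance $\sigma_n^{2}$, and the Bessel integral $e^{-s}I_0(s)$ whose $\sqrt{s}$-weighted supremum over $s=2\lambda\geq 0$ is exactly $\eta$. One simplification: the Jensen step (and the boundary worry at $t=\pm\pi$, $a_j=1/4$) is unnecessary, since the elementary bound $\sqrt{1-u}\leq e^{-u/2}$ applied factorwise already gives $\prod_{j}\sqrt{1-2a_j(1-\cos t)}\leq e^{-\sigma_n^{2}(1-\cos t)}$ without averaging the $a_j$. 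Your closing remark is also on point: $\sqrt{2\lambda}\,e^{-2\lambda}I_0(2\lambda)\to 1/\sqrt{2\pi}$ as $\lambda\to\infty$, but its maximum is attained at a finite $\lambda$ and is approached by Poisson-binomial laws converging to a shifted difference of independent Poissons, so the Gaussian surrogate would indeed undercut the sharp constant.
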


\subsection{Feige's Inequality}
To demonstrate that conditions (G1) and (G2) of the level-based
theorem hold, it is necessary to compute lower bounds on the
probability of $\prob{Y\geq y}$ where $Y$ represents the level of a
sampled individual. Following \cite{bib:Dang2015a}, we will make use of
a general result due to Feige to compute such lower bounds
\cite{bib:Feige2004} when $y<\expect{Y}$. For our purposes, it will be
convenient to use the following variant of Feige's theorem.
\begin{theorem}[Corollary 3, \cite{bib:Dang2015a}]\label{cor:feige-ineq}
	Let $Y_1,\dots,Y_n$ be $n$ independent random variables with support
	in $[0,1]$, define $Y = \sum_{i=1}^{n} Y_i$
	and $\mu = \expect{Y}$. It holds for every $\Delta>0$ that
	\begin{align*}
	\prob{Y > \mu - \Delta} \geq \min\left\{\frac{1}{13},\frac{\Delta}{1+\Delta}\right\}.
	\end{align*}
\end{theorem}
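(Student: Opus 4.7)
The plan is to reduce this Feige-type lower bound to Feige's classical inequality by passing to the complementary random variables $Z_i := 1 - Y_i$. This substitution converts the one-sided downward deviation of $Y$ into an upward deviation of a sum of non-negative bounded variables, which is the form Feige's theorem directly addresses.

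First, I would set $Z_i := 1 - Y_i$ for $i \in [n]$. Since each $Y_i$ has support in $[0,1]$, so does $Z_i$; in particular the $Z_i$ are independent, non-negative, and satisfy $\expect{Z_i} = 1 - \expect{Y_i} \leq 1$. Letting $Z := \sum_{i=1}^n Z_i = n - Y$, one has $\expect{Z} = n - \mu$, and the event of interest rewrites as
\[
\{Y > \mu - \Delta\} \;=\; \{n - Z > \mu - \Delta\} \;=\; \{Z < \expect{Z} + \Delta\}.
\]

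Next, I would invoke Feige's inequality in its generalised form, valid for arbitrary additive slack $\delta > 0$, namely
\[
\prob{Z < \expect{Z} + \delta} \;\geq\; \min\left\{\frac{1}{13},\;\frac{\delta}{1+\delta}\right\},
\]
and apply it with $\delta = \Delta$. Combined with the event identity in the previous step, this yields the claim immediately.

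The main obstacle is establishing the Feige-type inequality at the level of generality needed, since Feige's original 2006 theorem directly produces the constant $1/13$ only at the specific slack $\delta = 1$. The regime $\Delta \geq 1$ then follows painlessly by monotonicity of the event in $\Delta$, so the work lies entirely in the small-slack regime $\Delta < 1$, where the $\Delta/(1+\Delta)$ term is the binding piece. A naive Markov bound applied to $Z$ only delivers $\Delta/(\expect{Z}+\Delta)$, which is too weak when $\expect{Z} = n - \mu$ is large, so one must exploit the boundedness $Z_i \leq 1$ to replace $\expect{Z}$ by a constant in the denominator. This small-$\Delta$ refinement is the non-routine ingredient of the proof; with it in place, the derivation of the corollary reduces to the clean complementary substitution described above.
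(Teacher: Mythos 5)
This statement is quoted verbatim from Corollary~3 of \cite{bib:Dang2015a} and is not proved in the present paper; your complementation argument $Z_i \coloneqq 1-Y_i$ followed by an application of Feige's inequality to $Z=\sum_i Z_i$ is exactly the standard derivation used there, and it is correct. Your closing concern is unnecessary: Feige's theorem in \cite{bib:Feige2004} is already stated for arbitrary slack $\delta>0$ with the bound $\min\{1/13,\,\delta/(1+\delta)\}$ (the $\delta=1$ case is merely the headline instance), so no small-$\delta$ refinement needs to be re-established.
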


\section{Proof idea}\label{sec:proof-idea}
This section is dedicated to showing how the upper bound on the
expected runtime of \umda on \onemax is achieved using the level-based
theorem with anti-concentration bounds.  Our approach refines the
analysis in \cite{bib:Dang2015a} by taking into account
anti-concentration properties of the random variables involved.  As
already discussed in Section~\ref{level-based-theorem-section}, we
need to verify three conditions (G1), (G2) and (G3) before an
upper bound is guaranteed. The first two conditions concern the
probability of sampling an offspring belonging to a higher
level. Often verifying condition (G2) requires less effort than that
of condition (G1) since for (G2) we usually have more information on
the current population by assumption.

We chose $m'<1$, then it follows that the 
marginal probabilities are in
\begin{displaymath}
p_t(i) \in \left\{\frac{k}{\mu}\mid k\in[\mu-1]\right\}\cup
\left\{1-\frac{1}{n}, \frac{1}{n} \right\}.
\end{displaymath}
When $p_t(i)=1-1/n$ or $1/n$, we say that 
the marginal probability is at the upper
or lower border, respectively. Therefore, 
we can categorise values for $p_t(i)$ into three groups: those at the upper
margin $1-1/n$, those at the lower margin $1/n$, and those within the
closed interval $[1/\mu, 1-1/\mu]$. 
For \onemax, all bits have the same weight and the fitness 
is just the sum of these bit values, so the re-arrangement 
of bit positions will not have any impact on the 
distribution of sampled offspring. 
As a result, without loss of generality, we
can re-arrange the bit-positions so that for two integers
$k,\ell\geq 0$, it holds
\begin{itemize}
	\item for all $i\in[1,k],$ $1\leq X_i \leq \mu-1$ and $p_t(i)=X_i/\mu$, 
	\item for all $i\in(k,k+\ell]$, $X_i = \mu$ and $p_t(i)=1-1/n$, and
	\item for all $i\in(k+\ell,n]$, $X_i =0$ and $p_t(i)=1/n$.
\end{itemize}	

Given the search space $\mathcal{X}\coloneqq \{0,1\}^n$, 
we define the levels as the canonical fitness-based partition
\begin{align}
A_j & \coloneqq \left\{x\in \mathcal{X} \mid \onemax(x)=j-1\right\}.
\label{eq:level-def}
\end{align}
For a given time
$t\in\mathbb{N}$, and for all integers $i,j$ with
$1\leq i\leq j\leq n$, define the Poisson-Binomially distributed random variables
\begin{align*}
Y_{i,j} \coloneqq \sum_{k=i}^j Y_k,\quad \text{ where }\quad
Y_k \sim \bernoulli(p_t(k))\quad \text{ for all }k\in[n].
\end{align*}
Note that the probability occurring in conditions (G1) and (G2) of
the level-based theorem can now be re-written as
\begin{align*}
\Pr_{y \sim \mathcal{D}(P_t)}\left(y \in A_{\geq j+1}\right) = \prob{Y_{1,n}\geq j}.
\end{align*}

To verify condition (G1), by assumption all $\mu$ top candidate 
solutions in the current population belong to $A_{j}$, 
i.e. having exactly $j-1$ one-bits.  We need to calculate 
a lower bound $z_j$ on the probability of sampling an 
offspring having at least $j$ 1-bits. This probability 
$\Pr(Y_{1,n}\geq j)$ is the area marked 
by the diagonal lines in Figure~\ref{fig:proof}.
\begin{figure}	
	\centering
	\begin{tikzpicture}
	\begin{axis}[
	no markers, 
	domain=0:9,
	samples=100,
	axis lines*=left, 	
	xlabel=$Y$, 
	ylabel=\empty,	
	every axis y label/.style={at=(current axis.above origin),anchor=south},
	every axis x label/.style={at=(current axis.right of origin),anchor=west},	
	y axis line style={draw opacity=0},
	height=4cm, 
	width=8cm,	
	xtick={5.8,4.5},
	ytick={},
	enlargelimits=false, 
	clip=false, 
	axis on top,
	grid = major
	]
	\pgfdeclarepatternformonly{north east lines wide}%
	{\pgfqpoint{-1pt}{-1pt}}%
	{\pgfqpoint{10pt}{10pt}}%
	{\pgfqpoint{9pt}{9pt}}%
	{
		\pgfsetlinewidth{0.4pt}
		\pgfpathmoveto{\pgfqpoint{0pt}{0pt}}
		\pgfpathlineto{\pgfqpoint{9.1pt}{9.1pt}}
		\pgfusepath{stroke}
	}
	\pgfplotsset{ticks=none};
	\addplot [fill=cyan!20,pattern=north east lines wide, draw=none, domain=5.8:9] {gauss(4.5,1.3)} \closedcycle;
	\addplot [fill=black!20, draw=none, domain=4.5:5.8] {gauss(4.5,1.3)} \closedcycle;
	\addplot [very thick,cyan!50!black] {gauss(4.5,1.3)};
	\node [below] at (axis cs:  4.5,  0) {$\mathbb{E}\left[Y\right]$};
	\node [below] at (axis cs: 5.8,  0) {$j$};
	\end{axis}
	\end{tikzpicture}	
	\caption{Distribution of number of one-bits}
	\label{fig:proof}
\end{figure}
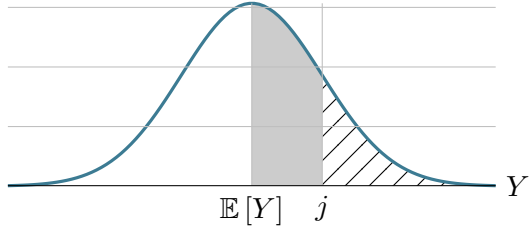

We aim to obtain an upper bound $\mathcal{O}(n\lambda)$ 
of \umda on \onemax using the level-based theorem. 
Note that the logarithmic factor $\mathcal{O}(\log\lambda)$ in the first upper 
bound $\mathcal{O}(n\lambda\log \lambda)$ in 
\cite{bib:Dang2015a} stems from the lower bound $z_j=\Omega(\mu^{-1})$.
We need a better bound $z_j=\Omega\left((n-j+1)/n\right)$. This led
us to consider three cases according to different 
configurations of the current population in Step 3 of 
Theorem~\ref{onemax-proof} below.  
\begin{enumerate}
	\item $k\geq \mu$. We will see that this implies that the variance of 
	$Y_{1,k}$ is quite large, hence the distribution of $Y_{1,k}$ cannot be
	too concentrated on the mean $\mathbb{E}[Y_{1,k}]=j-\ell-1$. 
	As a result, it is sufficient to get an extra 1-bit from the first 
	$k$ positions to obtain an offspring belonging to $A_{\geq j+1}$. 
	The probability of sampling $j$ 1-bits is bounded from below by 
	$\Pr(Y_{1,n}\geq j)\geq \Pr(Y_{1,k}\geq j-\ell)\cdot \Pr(Y_{k+1, k+\ell}=\ell)$, 
	where $\Pr(Y_{1,k}\geq j-\ell)$ is measured using the 
	anti-concentration result from Theorem~\ref{thm:anticoncentration} 
	and Lemma~\ref{int-expectation}. 
	
	\item $k<\mu$ and $j\geq n+1-n/\mu$. In this case, 
	the current level is very close to the optimal one, 
	and the bitstring has few zero-bits. 
	As already obtained from \cite{bib:Dang2015a}, 
	the upgrade probability in this case is $\Omega(\mu^{-1})$. 
	Since the condition can be rewritten as 
	$\mu^{-1}\geq (n-j+1)/n$, it ensures that 
	$z_j=\Omega(\mu^{-1})=\Omega((n-j+1)/n)$.  
	
	\item The remaining cases. Later will we prove that given
	$\mu\leq \sqrt{n(1-c)}$ for some constant $c\in (0,1)$, all
	remaining cases excluded by the first two cases are covered in
	$0\leq k<(1-c)(n-j+1)$.  In this case, $k$ is relatively small, and
	$\ell$ is not too large since the current level is not very close to
	the optimal one. This implies that most zero-bits must be located in the last
	$n-k-\ell$ positions, and it suffices to sample an extra 1-bit from
	this region.  The probability of sampling an
	offspring belonging to levels $A_{\geq j+1}$ is then $\Omega((n-j+1)/n)$.
\end{enumerate} 

\section{Runtime of UMDA on \onemax}
\onemax is the problem of maximising the number of one-bits in a
bitstring,  and is formally defined by
\begin{math}
\onemax\left(x\right)=\sum_{i=1}^{n}x_i
\end{math}.
It is well-known that the \onemax problem can be optimised in expected
time $\Theta(n\log n)$ using the $(1+1)$ Evolutionary Algorithm. 
The level-based theorem was applied to derive the first upper bound 
on the expected optimisation time of the \umda on \onemax, which is 
$\mathcal{O}(n\lambda\log \lambda)$, assuming $\lambda=\Omega(\log n)$ 
\cite{bib:Dang2015a}. By refining this method, we will obtain the better bound 
$\mathcal{O}(n\lambda)$.

\begin{theorem}\label{onemax-proof}
	For some constant $a>0$, and any constant $c\in(0,1)$,
	the UMDA with parent population size
	$a\ln(n)\leq \mu\leq \sqrt{n(1-c)}$,
	offspring population size $\lambda\geq (13e)\mu/(1-c)$,
	and margin $m'\coloneqq\mu/n$, has
	expected optimisation time $\mathcal{O}\left(n\lambda\right)$ on \onemax.
\end{theorem}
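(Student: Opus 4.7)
The plan is to apply the level-based theorem (Theorem~\ref{thm:levelbasedtheorem}) with the canonical fitness-based partition $A_j = \{x : \onemax(x) = j-1\}$, so that there are $m = n+1$ levels. Following the five-step procedure, I will identify parameters $z_j$, $\gamma_0$ and $\delta$ that meet (G1) and (G2), check that $\mu \geq a \ln n$ is large enough for (G3), and then substitute into the runtime formula. The exploitation of the margin $m' = \mu/n$ is essential because it forces each marginal probability into $\{1/n, 1-1/n\} \cup \{k/\mu : 1 \leq k \leq \mu-1\}$, and without loss of generality the bit positions can be grouped into the three blocks described before the theorem: the first $k$ positions with ``interior'' probabilities $p_t(i)=X_i/\mu$, the next $\ell$ positions at the upper border $1-1/n$, and the remaining $n-k-\ell$ positions at the lower border $1/n$.

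The verification of (G2) is the easier one and essentially follows \cite{bib:Dang2015a}. Assuming $\gamma \lambda$ of the current individuals sit in $A_{\geq j+1}$ while at least $\gamma_0 \lambda$ sit in $A_{\geq j}$, truncation selection with cutoff $\gamma_0 = \mu/\lambda \leq (1-c)/(13e)$ guarantees, by the margin rule, that the selected frequency at each position exceeds the corresponding frequency in the population by a multiplicative factor bounded below by roughly $1/\gamma_0$; Feige's inequality (Theorem~\ref{cor:feige-ineq}) then yields $\Pr(y \in A_{\geq j+1}) \geq (1+\delta)\gamma$ for some constant $\delta>0$, once the constants are tuned using the gap between $\lambda$ and $\mu$ implied by $\lambda \geq 13 e \mu/(1-c)$.

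The heart of the proof is (G1), and this is where I expect the real work, since I need $z_j = \Omega((n-j+1)/n)$ rather than the weaker $\Omega(1/\mu)$ that was used previously; the logarithmic factor $O(\log\lambda)$ in \cite{bib:Dang2015a} comes precisely from that gap. I would follow the three-case split indicated in Section~\ref{sec:proof-idea}. In Case~1 ($k \geq \mu$), each interior marginal $p_t(i)$ contributes at least $(1/\mu)(1-1/\mu)$ to the variance of $Y_{1,k}$, so $\sigma^2 = \Omega(k/\mu) = \Omega(1)$. Theorem~\ref{thm:anticoncentration} then bounds $\Pr(Y_{1,k}=y) \leq \eta/\sigma$ uniformly in $y$, which together with $\expect{Y_{1,k}} = j-1-\ell$ (computed from $\sum_i X_i = \mu(j-1)$) shows that $\Pr(Y_{1,k} \geq j-\ell)$ is at least a positive constant; multiplying by $\Pr(Y_{k+1,k+\ell}=\ell) = (1-1/n)^\ell = \Omega(1)$ gives $z_j = \Omega(1) = \Omega((n-j+1)/n)$. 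In Case~2 ($k<\mu$ and $j \geq n+1-n/\mu$), the bound $z_j = \Omega(1/\mu) = \Omega((n-j+1)/n)$ follows directly from the small-$\mu$ argument in \cite{bib:Dang2015a}. Case~3 covers what remains, and using the inequality $\mu \leq \sqrt{n(1-c)}$ one checks that this case lies in $0 \leq k < (1-c)(n-j+1)$; then $\ell$ is not too large, so there are at least a constant fraction of the $n-j+1$ remaining zero-positions among the last block, each of which is flipped to~1 with probability $1/n$, giving again $z_j = \Omega((n-j+1)/n)$.

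Given such a $z_j$ and constant $\delta$, condition (G3) reduces to $\lambda = \Omega(\log n)$, which is implied by $\mu \geq a \ln n$ and $\lambda \geq 13 e \mu/(1-c)$ for sufficiently large~$a$. Finally, substituting $z_j = \Omega((n-j+1)/n)$ into the runtime bound of Theorem~\ref{thm:levelbasedtheorem} yields
\begin{displaymath}
\expect{T} = O\!\left(\sum_{j=1}^{n}\left[\lambda \ln\!\left(\frac{6\delta\lambda}{4 + z_j \delta \lambda}\right) + \frac{1}{z_j}\right]\right) = O\!\left(n \lambda + \sum_{j=1}^{n}\frac{n}{n-j+1}\right) = O(n\lambda + n \log n) = O(n\lambda),
\end{displaymath}
where the logarithmic term inside the first sum evaluates to $O(1)$ because $z_j \delta \lambda = \Omega(\lambda/n) \cdot (n-j+1)$ so its average contribution stays bounded, and the harmonic tail $O(n \log n)$ is absorbed into $O(n\lambda)$ since $\lambda = \Omega(\log n)$. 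The hardest step is undoubtedly the anti-concentration argument in Case~1: it is where the new technique genuinely departs from \cite{bib:Dang2015a} and where the improvement from $O(n \lambda \log \lambda)$ to $O(n\lambda)$ is bought.
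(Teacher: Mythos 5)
Your proposal is correct and follows essentially the same route as the paper's proof: the canonical fitness-based partition, Feige's inequality for (G2), the three-case split for (G1) with the Baillon et al.\ anti-concentration bound plus the integer-expectation property $\Pr\left(Y\geq \expect{Y}\right)\geq 1/2$ in Case~1, and Stirling's approximation to evaluate the logarithmic sum in the final runtime bound. The only place where your sketch needs more numerical care than ``$\sigma^2=\Omega(1)$'' is Case~1: the lower bound $1/2-\eta/\sigma_k$ is positive only if $\sigma_k^2>4\eta^2\approx 0.879$, which your own estimate $\sigma_k^2\geq (k/\mu)(1-1/\mu)\geq 1-1/\mu$ does deliver (the paper uses $\sigma_k^2\geq 9/10$), but an unspecified $\Omega(1)$ constant would not suffice.
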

\begin{proof}
	First, we define $\gamma_0\coloneqq \mu/\lambda$. Since
	$\mu\leq \sqrt{n(1-c)}$, it 
	follows that $m'=\mu/n<1$, and the upper 
	and lower borders for $p_t(i)$ simplify to $1-1/n$ 
	and $1/n$, respectively. We re-arrange the 
	bit positions and define the random variable $Y_{i,j}$ 
	as in Section~\ref{sec:proof-idea}. We now 
	closely follow the recommended 5-step 
	procedure for applying the level-based theorem
	\cite{DBLP:Corus2017}. 
	
	\textbf{Step 1.} The levels are defined as in Eq.~(\ref{eq:level-def}).
	There are exactly $m=n+1$ levels from $A_1$ to $A_{n+1}$, 
	where level $A_{n+1}$ consists of the optimal solution. 
	
	\textbf{Step 2.} We verify condition (G2) of the level-based
	theorem. In particular, for some $\delta\in(0,1),$ and for any level 
	$j\in [m-2]$, and any  $\gamma\in (0,\gamma_0]$, assuming 
	that the population is configured  such that
	$|P_t\cap A_{\geq j}|\geq \gamma_0\lambda=\mu$ and
	$|P_t\cap A_{\geq j+1}|\geq \gamma\lambda>0$, we must show that the
	probability to sample an offspring 
	belonging to level $A_{j+1}$ or higher must be no
	less than $(1+\delta)\gamma$. By
	the re-arrangement of the bit-positions mentioned in Section \ref{sec:proof-idea}, it holds
	\begin{align}
	\sum_{i=k+1}^{k+\ell}X_i=\mu \ell \quad 
	\mbox{ and }\quad \sum_{i=k+\ell+1}^{n}X_i=0,	\label{eq:1}
	\end{align}	
	where $X_i, i\in[n],$ are given in Algorithm \ref{umda-algor}.
	By assumption, the current population $P_t$ consists of at 
	least $\gamma\lambda$ individuals with $j$ one-bits 
	and $\mu-\gamma\lambda$ individuals with $j-1$ one-bits, therefore 
	\begin{align}\label{eq:2}
	\sum_{i=1}^{n}X_i\geq \gamma\lambda j + \left(\mu-\gamma\lambda\right)(j-1) 
	= \gamma\lambda+\mu\left(j-1\right).
	\end{align}
	Combining (\ref{eq:1}), (\ref{eq:2}) and noting that $\lambda=\mu/\gamma_0$ yield
	\begin{align*}		
	\sum_{i=1}^{k}X_i 
	&  =\sum_{i=1}^{n}X_i-\sum_{i=k+1}^{k+\ell}X_i-\sum_{i=k+\ell+1}^{n}X_i \\
	& \geq \gamma\lambda+\mu\left(j-1\right)-\mu\ell=\mu\left(j-\ell-1+\frac{\gamma}{\gamma_0}\right).	
	\end{align*}
	Let $Z=Y_{1,k}+Y_{k+\ell+1,n}$ be the total number of 1-bits sampled
	in the first $k$ and the last $n-k-\ell$ positions. $Y_{1,k}$ and
	$Y_{k+\ell+1,n}$ take integer values only, and so does $Z$. Since
	$k+\ell\leq n$, the expected value of $Z$ is
	\begin{displaymath}
	\begin{split}
	\mathbb{E}\left[Z\right]
	&= \sum_{i=1}^{k}p_t(i) +\sum_{i=k+\ell+1}^{n}p_t(i) \\
	&= \frac{1}{\mu}\left(\sum_{i=1}^{k}X_i\right) +\frac{1}{n}\left(n-k-\ell\right)	
	\geq j-\ell-1+\frac{\gamma}{\gamma_0} .
	\end{split}
	\end{displaymath}	
	In order to obtain an offspring with at least $j$ one-bits, it is
	sufficient to sample $\ell$ one-bits in positions
	$k+1$ to $k+\ell$ and at least $j-\ell$ one-bits from
	the other positions. The probability of this event is bounded from below by
	\begin{align}   
	\Pr\left(Y_{1,n} \geq j\right) \geq \Pr\left(Z\geq j-\ell\right)\cdot \Pr\left(Y_{k+1,k+\ell}=\ell\right). \label{eq:g2-upgrade-prob}  
	\end{align}  	        
	The probability to obtain $\ell$ 1-bits in the middle interval from
	position $k+1$ to $k+\ell$ is 
	\begin{align}
	\Pr\left(Y_{k+1,k+\ell}=\ell\right)=\left(1-\frac{1}{n}\right)^\ell\geq \left(1-\frac{1}{n}\right)^{n-1}\geq \frac{1}{e}.\label{eq:no-mut-ell-prob}
	\end{align}	
	We now need to calculate $\Pr\left(Z\geq j-\ell\right)$. 
	Since $Z$ takes integer values only, then
	\begin{displaymath}	
	\begin{split}
	\Pr\left(Z\geq j-\ell\right)&=\Pr\left(Z> j-\ell-1\right)\\
	&\geq\Pr\left(Z>\mathbb{E}\left[Z\right]-\frac{\gamma}{\gamma_0}\right).
	\end{split}		
	\end{displaymath}		
	Applying Theorem~\ref{cor:feige-ineq} for
	$\Delta =\gamma/\gamma_0\leq 1$ and noting that we chose $\mu$ and
	$\lambda$ such that
	such that $1/\gamma_0=\lambda/\mu\geq 13e/(1-c)= 13e(1+\delta)$ yield
	\begin{align}
	\Pr\left(Z\geq j-\ell\right)
	& \geq \min\bigg\{\frac{1}{13},\frac{\Delta}{\Delta+1}\bigg\}\\
	& \geq \frac{\Delta}{13}  = \frac{\gamma}{13\gamma_0} \geq e\left(1+\delta\right)\gamma.\label{eq:g2-rest-upgrade}
	\end{align}	
	Therefore, combining (\ref{eq:g2-upgrade-prob}),
	(\ref{eq:no-mut-ell-prob}), and (\ref{eq:g2-rest-upgrade}) give
	\begin{math}
	\Pr\left(Y_{1,n}\geq j\right)\geq \left(1+\delta\right)\gamma	
	\end{math},
	and condition (G2) holds.
	
	\textbf{Step 3.} We now consider condition (G1) for any 
	level $j$ defined with $\gamma=0$. In other words, all 
	the top $\mu$ individuals in the current population $P_t$ 
	have exactly $j-1$ one-bits, and, therefore,
	\begin{math}
	\sum_{i=1}^{n}X_i= \mu\left(j-1\right)	
	\end{math}
	and
	\begin{math}
	\sum_{i=1}^{k}X_i= \mu\left(j-\ell-1\right)	
	\end{math}.	
	There are three different cases that cover all 
	situations according to variables $k$ and $j$.
	
	\underline{Case 1:} Assume that $k\geq \mu$. The variance
	of the first $k$ bits is 
	\begin{displaymath}
	\var{Y_{1,k}}=\sum_{i=1}^{k}p_t(i)\left(1-p_t(i)\right)\geq 
	\frac{k}{\mu}\left(1-\frac{1}{\mu}\right)\geq \frac{9k}{10\mu}\geq \frac{9}{10},
	\end{displaymath}
	where the second inequality holds for sufficiently large $n$ because $\mu\geq a\ln(n)$. 
	Theorem \ref{thm:anticoncentration} applied with $\sigma_k\geq
	\sqrt{9/10}$ now gives
	\begin{align*}
	\prob{Y_{1,k}=j-\ell-1}\leq \eta/\sigma_k.
	\end{align*}
	Furthermore, since  $\expect{Y_{1,k}}$ is an integer,
	Lemma~\ref{int-expectation} implies
	\begin{align*}
	\prob{Y_{1,k}\geq \expect{Y_{1,k}}} \geq 1/2.
	\end{align*}
	By combining these two probability bounds, 
	the probability to obtain at least $j-\ell$ one-bits 
	from the first $k$ positions is 
	\begin{align}
	\Pr\left(Y_{1,k}\geq j-\ell\right)
	&=\Pr\left(Y_{1,k}\geq j-\ell-1\right)-\Pr\left(Y_{1,k}=j-\ell-1\right)\nonumber\\
	&=\Pr\left(Y_{1,k}\geq \mathbb{E}\left[Y_{1,k}\right]\right)-\Pr\left(Y_{1,k}=j-\ell-1\right)\nonumber\\
	&\geq \frac{1}{2}-\frac{\eta}{\sigma_k} >\frac{1}{2}-\frac{0.4688}{\sqrt{9/10}} = \Omega(1).\label{eq:case1-pos-upgrade}
	\end{align}
	In order to obtain an offspring belonging to levels $A_{\geq j+1}$,
	it is sufficient to sample at least $j-\ell$ one-bits
	from the first $k$ positions and $\ell$ 1-bits from position
	$k+1$ to position $k+\ell$. By (\ref{eq:no-mut-ell-prob}) and 
	(\ref{eq:case1-pos-upgrade}), the probability 
	of this event is bounded from below by
	\begin{align*}
	\Pr\left(Y_{1,n}\geq j\right)&\geq \Pr\left(Y_{1,k}\geq j-\ell\right)\cdot\Pr\left(Y_{k+1,k+\ell}=\ell\right)\\
	&> \Omega(1)\cdot\frac{1}{e} =\Omega(1).
	\end{align*} 
	
	\underline{Case 2:} $k<\mu$ and $j\geq n(1-1/\mu)+1$. The 
	second condition is equivalent to $1/\mu\geq (n-j+1)/n$. The probability
	to obtain an offspring belonging to levels $A_{\geq j+1}$ is
	then bounded
	from below by
	\begin{multline*}
	\Pr\left(Y_{1,n}\geq j\right)\geq\\ 
	\Pr\left(Y_{1,1}=1\right)
	\Pr\left(Y_{2,k}\geq j-\ell-1\right)
	\Pr\left(Y_{k+1,k+\ell}=\ell\right)\\
	\geq \frac{1}{\mu}\Pr\left(Y_{2,k}\geq j-\ell-1\right)\frac{1}{e}
	\geq \frac{1}{14e\mu},
	\end{multline*}
	where we used the inequality $\Pr\left(Y_{2,k}\geq
	j-\ell-1\right)\geq 1/14$ for $\mu\geq 14$ 
	proved in \cite{bib:Dang2015a}. Since 
	$1/\mu\geq (n-j+1)/n$, we can conclude that
	\begin{displaymath}
	\Pr\left(Y_{1,n}\geq j\right)
	\geq \frac{1}{14e\mu}
	\geq \frac{n-j+1}{14en}=\Omega\left(\frac{n-j+1}{n}\right).
	\end{displaymath}	
	
	\underline{Case 3:} $k<\mu$ and $j<n(1-1/\mu)+1$. This
	case covers all the remaining situations not included by the
	first two cases. The latter inequality can be 
	rewritten as $n-j+1\geq n/\mu$. We also have 
	$\mu\leq \sqrt{n(1-c)}$, so $n/\mu\geq \mu/(1-c)$, then 
	\begin{displaymath}
	(1-c)(n-j+1)  \geq (1-c)(n/\mu) \geq(1-c)\mu/(1-c)=\mu>k.
	\end{displaymath}
	Thus, the two conditions can be shortened to 
	$0\leq k<(1-c)(n-j+1)$. In this case, 
	the probability of sampling $j$ one-bits is 
	\begin{align*}
	&\Pr(Y_{1,n}\geq j)	\\
	&\geq \Pr\left(Y_{1,k}\geq j-\ell-1\right)
	\Pr\left(Y_{k+1,k+\ell}=\ell\right)
	\Pr\left(Y_{k+\ell+1,n}\geq 1\right)\\
	&\geq \frac{1}{2}\cdot\frac{1}{e}\cdot\frac{n-k-\ell}{n} 
	= \frac{n-k-\ell}{2en}	.
	\end{align*}
	Since $\ell\leq j-1$ and $k<(1-c)(n-j+1)$, then 
	\begin{align*}
	\Pr\left(Y_{1,n}\geq j\right) > \frac{n-(1-c)(n-j+1)-j+1}{2en} 
	=\Omega\left(\frac{n-j+1}{n}\right).
	\end{align*}	
	Combining all three cases together yields the upgrade probability 
	\begin{align*}
	\Pr\left(Y_{1,n}\geq j\right) & \geq \min\bigg\{\Omega(1), \;
	\Omega\left(\frac{n-j+1}{n}\right)\bigg\}
	= \Omega\left(\frac{n-j+1}{n}\right) =: z_j,
	\end{align*}
	and, therefore, $z_*\coloneqq \min_{j\in[n]}\{z_j\}=\Omega(1/n)$.
	
	\textbf{Step 4.} We consider condition (G3) regarding the 
	population size. We have $1/\delta^2=\mathcal{O}(1)$,
	$1/z_*= \mathcal{O}(n)$, and $m=\mathcal{O}(n)$. Therefore there must 
	exist a constant $a>0$ such that	    
	\begin{align*}
	\left(\frac{a}{\gamma_0}\right)\ln(n) \geq \left(\frac{4}{\gamma_0\delta^2}\right)\ln\left(\frac{128m}{z_*\delta^2}\right). 
	\end{align*}
	The requirement $\mu\geq a\ln(n)$ now implies that
	\begin{align*}
	\lambda 
	=  \frac{\mu}{\mu/\lambda} 
	\geq \left(\frac{a}{\gamma_0}\right)\ln(n)
	\geq \left(\frac{4}{\gamma_0\delta^2}\right)\ln\left(\frac{128m}{z_*\delta^2}\right),
	\end{align*}
	hence condition (G3) is satisfied. 
	
	\textbf{Step 5.} We have shown that conditions (G1), (G2), and (G3) are satisfied. By
	Theorem~\ref{thm:levelbasedtheorem} and the bound $z_j=\Omega((n-j+1)/n)$,
	the expected optimisation time is therefore
	\begin{align*}
	\mathbb{E}\left[T\right]=
	\mathcal{O}\left(\lambda\sum_{j=1}^{n}\ln\left(\frac{n}{n-j+1}\right)+
	\sum_{j=1}^{n}\frac{n}{n-j+1}\right).
	\end{align*}		
	We now estimate the two terms separately. By  
	Stirling's approximation (Lemma~\ref{stirling}), the first term is
	\begin{align*}
	\mathcal{O}\left(\lambda\sum_{j=1}^{n}\ln\left(\frac{n}{n-j+1}\right)\right)
	&=\mathcal{O}\left(\lambda\ln\prod_{j=1}^n\frac{n}{n-j+1}\right)
	=\mathcal{O}\left(\lambda\ln \frac{n^n}{n!}\right)\\
	&    =\mathcal{O}\left(\lambda\ln \frac{n^n\cdot e^n}{n^{n+1/2}}\right)
	= \mathcal{O}\left(n\lambda\right).
	\end{align*}
	The second term is
	\begin{displaymath}
	\mathcal{O}\left(\sum_{j=1}^{n}\frac{n}{n-j+1}\right)
	=\mathcal{O}\left(n\sum_{k=1}^{n}\frac{1}{k}\right) 
	= \mathcal{O}\left(n\log n\right).	
	\end{displaymath}
	Since $\lambda > \mu= \Omega(\log n)$,
	the expected optimisation time is 
	\begin{displaymath}
	\mathbb{E}\left[T\right]=\mathcal{O}\left(n\lambda\right)+
	\mathcal{O}\left(n\log n\right)=\mathcal{O}\left(n\lambda\right).
	\end{displaymath}
\end{proof}	

\section{An empirical result}
So far we have proven an upper bound 
$\mathcal{O}\left(n\lambda\right)$ on the expected 
runtime of \umda on \onemax with  
parent population size $a\log n\leq \mu=\mathcal{O}(\sqrt{n})$ , offspring population size $\lambda = \Omega(\mu)$,
and margin size $m'\leq 1$. This result is tighter than the 
bound $\mathcal{O}(n\lambda\log \lambda)$, 
obtained in \cite{bib:Dang2015a}, which provided the first
upper bound for \umda on \onemax. However, the bound $\mathcal{O}(n\lambda)$ 
is asymptotic and only provides 
information on the growth of the expected runtime 
according to the problem size $n$ for 
sufficiently large $n \geq n_0$. It provides no information on 
the multiplicative constant or the 
influences of lower order terms. Hence it 
makes sense to consider the empirical runtime of 
\umda on \onemax to partially compensate for the limitations in the
theoretical analysis.

We carry out a small experiment by running the \umda on \onemax 
with initial parameter settings consistent with those 
conditions mentioned above. 
The settings of parameters are as follows: 
$\lambda=\sqrt{n}$, 
$\mu=\log n$ and $m'=0.5$ 
for $n \in \{100, 200,\ldots,10000\}$. 
The results are shown in Figure \ref{fig:exp1-plot}. 
For each value of $n$, the algorithm is run 100 
times, and then the average runtime is computed. 
The mean runtime for each value of $n$ is estimated
with 95\% confidence intervals using the
\textit{bootstrap percentile method} \cite{bib:Lehre2014} with 100 
bootstrap samples. Each mean point is plotted 
with two error bars to illustrate the upper and 
lower margins of the confidence intervals.

\begin{figure}
\centering
	\includegraphics[width=.495\textwidth]{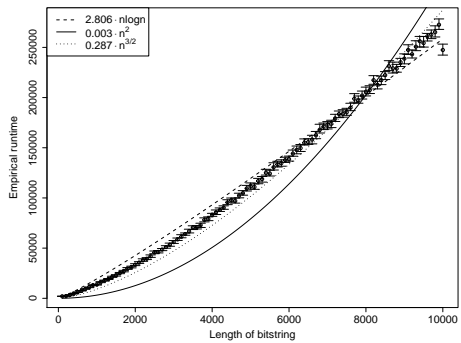}
	\caption{Mean runtime of \umda on \onemax with 95\% 
		confidence intervals plotted with error bars. 
		The fitted models are also plotted.}
	\label{fig:exp1-plot}
\end{figure}

From the parameter settings chosen for the experiment, 
Theorem~\ref{onemax-proof} gives the upper bound 
$\mathcal{O}(n^{3/2})$ for the expected optimisation time. We now 
compare this theoretical bound with the empirical runtime 
and two other bounds close to this model:
$\mathcal{O}(n\log n)$ which is the runtime of 
(1+1) \ea on \onemax, and the quadratic bound $\mathcal{O}(n^2)$. 
Following \cite{bib:Lehre2014}, we fit three positive constants 
$c_1, c_2$ and $c_3$ to the models 
$c_1\cdot n\log n$, $c_2\cdot n^{3/2}$ and $c_3\cdot n^{2}$ 
using non-linear 
least square regression. The 
correlation coefficient for each model is calculated to 
measure the fit of each model to the data. 

\begin{table}
	\centering
	\caption{Best-fit models.}
	\label{tab:exp1-table}
	\begin{tabular}{@{}l l@{}}
		\toprule
		Best-fit function &  Correlation coefficient \\
		\midrule	
		$2.806 \cdot n\log n$ & 0.9994 \\
		$0.287 \cdot n^{3/2}$ & 0.9900\\
		$0.003 \cdot n^{2}$ & 0.9689\\
		\bottomrule
	\end{tabular}
\end{table}	

From Table \ref{tab:exp1-table}, it can be seen that the first two
models $2.806\cdot n\log n$ and $0.287\cdot n^{3/2}$, with the
correlation coefficients $0.9994$ and $0.9900$ respectively, fit well
with the empirical data. The quadratic model fits less well with the
empirical data. These findings are consistent with the
theoretical expected optimisation time since the first two
models are members of $\mathcal{O}(n^{3/2})$. 
As already stated before, our bound $\mathcal{O}(n\lambda)$ 
is tight for $\lambda=\mathcal{O}(\log n)$; 
however, in this experiment we chose a larger offspring population size $\lambda=\sqrt{n}$.
For this case, the model $2.806\cdot n\log n$ has 
higher correlation coefficient than the model $0.287\cdot n^{3/2}$,
indicating that our theoretical bound may not be tight for this case.

\section{Conclusion}
Despite the long-time use of EDAs by the Evolutionary Computation
community, little has been known about their runtime, even for
apparently simple settings such as \umda on \onemax. Results about
the UMDA are not only relevant to Evolutionary Computation, but also
to Population Genetics where it corresponds to the notion of
\emph{linkage equilibrium}.

We have proved the upper bound $\mathcal{O}(n\lambda)$ which holds for
$a\log n\leq \mu=\mathcal{O}(\sqrt{n})$ where $a$ is a positive
constant. Although our result assumes that $\lambda\geq (1+c')\mu$ for some
positive constant $c'>0$, it does not require that $\mu$ is proportional in
size to $\lambda$.  The bound is tight when
$\lambda =\mathcal{O}(\log n)$; in this case, a tight bound
$\Theta(n\log n)$ on the expected optimisation time of the \umda on
\onemax is obtained, matching the well-known bound $\Theta(n\log n)$
for the (1+1) \ea on the class of linear functions. Although the bound
assumes a not too large parent population size
$\mu=\mathcal{O}(\sqrt{n})$, it finally closes the $\Theta(\log \log
n)$  gap
between the first upper bound
$\mathcal{O}(n\log n \log \log n)$ \cite{bib:Dang2015a} for certain
settings of $\lambda$ and $\mu$ and the recently discovered lower
bound $\Omega(\mu\sqrt{n}+n\log n)$ for $\lambda = (1+\Theta(1))\mu$
\cite{bib:Krejca}.  Future work should consider the runtime of UMDA on
\onemax for larger offspring population sizes $\mu=\omega(\sqrt{n})$
and different combinations of $\mu$ and $\lambda$,
as well as the runtime on more complex fitness landscapes.

Our analysis further demonstrates that the level-based theorem can
yield, relatively easily, asymptotically tight bounds for non-trivial,
population-based algorithms. An important additional component of the
analysis was the use of anti-concentration properties of the
Poisson-Binomial distribution. Unless the variance of the sampled
individuals is not too small, the distribution of the population
cannot be too concentrated anywhere, yielding sufficient diversity to
discover better solutions. We expect that these arguments will lead to
new results in runtime analysis of evolutionary algorithms.
\appendix

We use the following property of the
Poisson-Binomial distribution.

\begin{lemma}[Theorem 3.2, \cite{bib:Jogdeo}]\label{int-expectation}
	Let $Y_1,Y_2,\ldots,Y_n$ be $n$ independent Bernoulli random variables. 
	Let $Y\coloneqq \sum_{i=1}^{n}Y_i$ be the sum of these random variables and let 
	$\mu$ be the expectation of $Y$. If $\mu$ is an integer, then 
	\begin{displaymath}
	\Pr\left(Y\geq \mu\right)\geq 1/2.
	\end{displaymath}
\end{lemma}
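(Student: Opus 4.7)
The goal is to prove $\Pr(Y \geq \mu) \geq 1/2$ for a sum $Y$ of independent Bernoullis with integer expectation $\mu$ (the classical Jogdeo--Samuels theorem). My plan combines three structural facts about the Poisson-binomial distribution: log-concavity of the probability mass function, location of the mode, and the mean constraint.

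First, I would establish log-concavity of $a_k := \Pr(Y = k)$, namely $a_k^2 \geq a_{k-1} a_{k+1}$. This follows from Newton's inequalities applied to the probability generating function $G(z) = \prod_{i=1}^{n} ((1-p_i) + p_i z)$, whose roots are all real and non-positive, so its coefficient sequence is a Polya frequency sequence. Second, I would invoke Darroch's theorem, which places the mode of a Poisson-binomial in $\{\lfloor \mu \rfloor, \lceil \mu \rceil\}$; since $\mu$ is an integer, the mode is exactly $\mu$, so $a_\mu = \max_k a_k$.

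Third, I would translate mode-at-$\mu$, log-concavity, and $\mathbb{E}[Y]=\mu$ into the median bound $\Pr(Y \geq \mu) \geq 1/2$. Writing $u_j := a_{\mu+j}$ for $j \geq 0$ and $d_j := a_{\mu-j}$ for $j \geq 1$, log-concavity gives the chain of ratio inequalities $d_1/d_2 \geq u_0/d_1 \geq u_1/u_0 \geq u_2/u_1 \geq \cdots$, while the mean constraint reads $\sum_{j \geq 1} j(u_j - d_j) = 0$. The bound $u_0 + \sum_{j \geq 1}(u_j - d_j) \geq 0$ should then follow by playing the log-concave decay rates of the two tails against the mean identity. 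As a sanity check, in the geometric model $u_j = u_0 r^j$ and $d_j = d_1 s^{j-1}$ one can verify directly that the log-concavity constraint $r(2-s) \leq 1$ is exactly equivalent to $(1-r)/(1-rs) \geq 1/2$, so the bound is tight precisely where log-concavity is tight.

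The main obstacle is this last step. A tempting shortcut --- replacing two parameters $(p_i, p_j)$ by their common average to reduce to the binomial case --- fails, because AM-GM simultaneously increases both $\Pr(Y_i+Y_j=0)$ and $\Pr(Y_i+Y_j=2)$, so tail probabilities $\Pr(Y_i+Y_j \geq c)$ move in opposite directions for different thresholds $c$. Consequently the clean reduction to a one-parameter binomial is blocked, and one must instead execute the full log-concavity argument for a general integer-valued log-concave distribution with matching mode and mean, or equivalently carry out the original Jogdeo--Samuels induction on $n$ (which uses a nontrivial combinatorial identity relating $n$-fold and $(n{-}1)$-fold Bernoulli convolutions). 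This final combinatorial step is the analytic heart of the proof.
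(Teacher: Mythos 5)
The paper does not actually prove this lemma: it is imported verbatim as Theorem 3.2 of Jogdeo and Samuels \cite{bib:Jogdeo} and used as a black box, so there is no internal proof to compare your attempt against. Judged on its own terms, your proposal assembles correct ingredients but stops short of a proof. The first two steps are sound and standard: the probability generating function $\prod_{i=1}^n\bigl((1-p_i)+p_iz\bigr)$ has only real, non-positive roots, so Newton's inequalities give log-concavity of $a_k=\Pr(Y=k)$, and Darroch's rule places a mode at $\mu$ when $\mu\in\mathbb{Z}$. The gap is exactly where you locate it: the implication ``integer-valued, log-concave, mode at $\mu$, mean $\mu$ $\Rightarrow \Pr(Y\geq\mu)\geq 1/2$'' is itself a non-trivial theorem, and nothing in your write-up establishes it. The geometric-tails computation is a consistency check on a two-parameter subfamily, not an argument: log-concavity only forces the ratios $a_k/a_{k+1}$ to be non-decreasing, so neither tail need be geometric, and the whole difficulty is to exclude configurations in which the right tail carries its mass far from $\mu$ (so that $\sum_{j\geq 1}ju_j$ is large while $\sum_{j\geq 1}u_j$ is small) while the left tail piles up adjacent to $\mu$. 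Your closing sentence concedes that the ``analytic heart''---either the general log-concave mean--median comparison or the original Jogdeo--Samuels induction on $n$---remains to be executed; as written, the proposal is a plan with its decisive step missing.

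If you want to finish along these lines, a natural formulation of the missing step is: set $U_j=\Pr(Y\geq\mu+j)$ and $D_j=\Pr(Y\leq\mu-j)$, observe that the mean condition becomes $\sum_{j\geq 1}U_j=\sum_{j\geq 1}D_j$ by Abel summation, and then use $a_\mu\geq a_{\mu-1}$ together with the monotone ratio chain to deduce $U_0\geq D_1$. That deduction still requires a genuine extremal or inductive argument (your ratio chain alone, without the mean identity, does not suffice, and interleaving the two is delicate); alternatively one can simply cite Jogdeo--Samuels as the paper does. Until one of these is carried out, the lemma is not proved.
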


\begin{lemma}[Stirling's approximation \cite{bib:LCRC}]
	\label{stirling} For all $n\in \mathbb{N}$,
	\begin{displaymath}
	n!=\Theta\left(\frac{n^{n+1/2}}{e^n}\right).
	\end{displaymath}
\end{lemma}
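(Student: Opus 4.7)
The plan is to obtain matching upper and lower bounds on $\ln(n!) = \sum_{k=1}^n \ln k$ by comparing the sum to integrals of $\ln x$, exploiting concavity of $\ln$ to capture the $\tfrac{1}{2}\ln n$ correction term that is needed to get the exponent $n+1/2$ rather than merely $n$. After taking logs and bounding both sides, I would exponentiate at the very end to recover the $\Theta$ statement. Only constant-factor sharpness is demanded, so this proof can avoid the more delicate machinery (Wallis product, Euler--Maclaurin) used to pin down the sharp constant $\sqrt{2\pi}$.

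For the upper bound, concavity of $\ln$ yields the trapezoidal inequality $(\ln k + \ln(k+1))/2 \leq \int_k^{k+1}\ln x\,dx$. Summing over $k=1,\ldots,n-1$, the left side telescopes to $\ln(n!) - \tfrac{1}{2}\ln n$, while the right side equals $\int_1^n \ln x\,dx = n\ln n - n + 1$. Rearranging gives $\ln(n!) \leq (n+\tfrac{1}{2})\ln n - n + 1$, so $n! \leq e\cdot n^{n+1/2}e^{-n}$.

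For the lower bound, concavity gives the tangent inequality $\ln x \leq \ln k + (x-k)/k$. Integrating over the symmetric interval $[k-\tfrac{1}{2},k+\tfrac{1}{2}]$ (permissible for $k\geq 1$ since $k-\tfrac{1}{2}\geq \tfrac{1}{2}>0$), the linear term integrates to zero, leaving $\int_{k-1/2}^{k+1/2}\ln x\,dx \leq \ln k$. Summing over $k=1,\ldots,n$ yields $\ln(n!)\geq \int_{1/2}^{n+1/2}\ln x\,dx = (n+\tfrac{1}{2})\ln(n+\tfrac{1}{2}) - n + \tfrac{1}{2}\ln 2$. Using $\ln(n+\tfrac{1}{2})\geq \ln n$, I obtain $\ln(n!) \geq (n+\tfrac{1}{2})\ln n - n + \tfrac{1}{2}\ln 2$, i.e.\ $n!\geq \sqrt{2}\cdot n^{n+1/2}e^{-n}$.

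Combining the two inequalities gives $n! = \Theta(n^{n+1/2}/e^n)$, as claimed. There is no serious obstacle; the only subtlety is committing to the refined concavity bounds (trapezoid on one side, tangent/midpoint on the other) instead of naive monotonicity bounds, which would only establish $n^n/e^n \leq n! \leq (n+1)\cdot n^n/e^n$ and thus be off by a factor of $\sqrt{n}$. Small-$n$ edge cases (say $n\leq 2$) can be absorbed into the hidden constants of the $\Theta$.
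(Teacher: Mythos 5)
Your proof is correct. The paper itself offers no proof of this lemma --- it is stated as a known result and attributed to the textbook \cite{bib:LCRC} --- so there is nothing to compare against; your trapezoid/midpoint argument via concavity of $\ln$ is a standard, complete derivation of the weak form $\sqrt{2}\cdot n^{n+1/2}e^{-n}\leq n!\leq e\cdot n^{n+1/2}e^{-n}$, which is exactly the $\Theta$ statement needed (and indeed all the paper uses in Step 5 of Theorem~\ref{onemax-proof} is $\ln(n^n/n!)=O(n)$).
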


\end{document}